\definecolor{MyDarkBlue}{rgb}{0.15,0.25,0.85}
\definecolor{MyDarkGreen}{rgb}{0.1,0.6,0.1}
\definecolor{MyDarkBoh}{rgb}{0.1,0.6,0.8}
\newcommand{\argmin}{\mathop{\mathrm{arg\,min}}}
\def\Pp{\bf \mathcal{P}}
\def\C{{\bf C}}
\def\L{{\bf L}}
\def\s{{\bf s}}
\def\tr{\text{Tr}}
\newcommand{\x}{{\bf x}}
\newcommand{\xs}{{\bf X}_s}
\newcommand{\xt}{{\bf X}_t}
\newcommand{\z}{{\bf z}}
\newcommand{\y}{{\bf y}}
\renewcommand{\xi}{{\bf x}_i}
\newcommand{\xsi}{{\bf x}^s_i}
\newcommand{\xti}{{\bf x}^t_i}
\newcommand{\G}{{\bf {x}}}
\newcommand{\Ga}{\boldsymbol{\gamma}}
\newcommand{\Gzeroreg}{\boldsymbol{ \gamma}^\lambda_{0}}
\newtheorem{theorem}{Theorem}[section]
\newtheorem{lemma}[theorem]{Lemma}
\newtheorem{proposition}[theorem]{Proposition}
\newenvironment{proof}[1][Proof]{\begin{trivlist}
\item[\hskip \labelsep {\bfseries #1}]}{\end{trivlist}}
\newenvironment{definition}[1][Definition]{\begin{trivlist}
\item[\hskip \labelsep {\bfseries #1}]}{\end{trivlist}}
\newcommand{\qed}{\nobreak \ifvmode \relax \else
      \ifdim\lastskip<1.5em \hskip-\lastskip
      \hskip1.5em plus0em minus0.5em \fi \nobreak
      \vrule height0.75em width0.5em depth0.25em\fi}
\def\argmin{\mathop{\mathrm{argmin}}}
\def\Pp{\bf \mathcal{P}}
\def\C{{\bf C}}
\def\R{\mathbb{R}}
\def\C{\mathbf{C}}
\def\x{\mathbf{x}}
\def\Z{\mathbf{Z}}
\def\y{\mathbf{y}}
\def\one{\mathbf{1}}
\newtheorem{proposition*}{\textbf{Proposition}}
\newtheorem{theorem*}{\textbf{Theorem}}
\newtheorem{remark*}{\textbf{Remark}}
\newtheorem{definition*}{\textbf{Definition}}
\title{Generalized conditional gradient: analysis of convergence and applications}
\author{
Alain Rakotomamonjy\\
Universit\'e de Rouen\\ France\\
\texttt{alain.rakoto@insa-rouen.fr} \\
\And
R{\'e}mi Flamary \\
Lagrange, CNRS, UNS, OCA\\ France\\
\texttt{remi.flamary@unice.fr} \\
\AND
Nicolas Courty \\
Universit\'e de Bretagne du Sud, IRISA\\ France\\
\texttt{ncourty@irisa.fr}
}
\begin{document}

\maketitle
\begin{abstract}
The objectives of this technical report is to provide additional results
on the generalized conditional gradient methods introduced by Bredies et al. \cite{bredies2005equivalence}.
Indeed, when the objective function is smooth, we provide a novel certificate of optimality and we show that the algorithm
has a  linear convergence rate. Applications of this algorithm are also discussed.

\end{abstract}

\section{Generalized conditional gradient}
\label{sec:optim-fram-algor}

\label{sec:da}

We are interested in the problem of minimizing under constraints a composite function such
as
\begin{equation}\label{eq:prob_fws}
  \min_{\G \in \Pp}\quad F(\G)=f(\G) +g(\G),
\end{equation}
where both $f(\cdot)$ and $g(\cdot)$ are convex and differentiable
functions and $ \Pp$ is compact set of $\R^n$.
One might want to benefit from this composite structure during
the optimization procedure. For instance, if we have an efficient solver
for optimizing
\begin{equation}
 \min_{\G \in \Pp}\quad \langle {\nabla f}, \G \rangle +g(\G)\label{eq:linear_prox}
\end{equation}
it is of prime interest  to use this solver in the optimization scheme instead of
linearizing the whole objective function as one would do with a conditional
gradient algorithm \cite{bertsekas1999nonlinear}.

The resulting approach is defined in Algorithm \ref{algo:fws}, denoted CGS in the remainder (the S standing for Splitting).
{Conceptually, this algorithm lies in-between the original optimization
problem and the conditional gradient. Indeed, if we do
not consider any linearization, the step 3 of the algorithm is equivalent
to solving the original problem and one iterate will suffice for convergence.
If we use a full linearization as in the conditional gradient approach, step 3
is equivalent to solving a rough approximation of the original problem.
By linearizing only a part of the objective function, we optimize a better
approximation of that function, as opposed to a full linearization as in the conditional gradient approach.  This leads to a provably better certificate
of optimality than the one of the conditional gradient algorithm \cite{jaggi2013revisiting}. Also, if an efficient solver of the
partially linearized problem is available, this algorithm is of strong interest. This is notably the case in computation of regularized optimal transport problems~\cite{courty2015},
or Elastic-net regularization~\cite{zou2005regularization}, see the application Section for details.

Note that this partial linearization idea  has already been introduced by Bredies et al. \cite{bredies2005equivalence} for solving problem \ref{eq:prob_fws}.  Their theoretical results related
to the resulting algorithm apply  when $f(\cdot)$ is  differentiable, $g(\cdot)$ convex $f$ and $g$ satisfy some others mild conditions like coercivity. These
results state that the generalized conditional gradient algorithm is a descent method and that any limit point of the algorithm is a stationary point of $f + g$.

In what follows we provide some results when $f$ and $g$ are differentiable.
Some of these results provide novel insights on the generalized gradient
algorithms (relation between optimality and minimizer of the search direction, convergence rate, and optimality certificate; while some are redundant to those
proposed by Bredies (convergence).

\begin{algorithm}[t]
\caption{Generalized Gradient Splitting (CGS)}
\label{algo:fws}
\begin{algorithmic}[1]
\STATE Initialize $k=0$ and $\G^0\in\mathcal{P}$
\REPEAT
\STATE Solve problem     $\s^k=\argmin_{\G \in \Pp} \quad\left < \nabla f(\G^k),\G\right>+ g(\G)$
\STATE Find the optimal step $\alpha^k$ with $\Delta\G=\s^k-\G^k$
    \begin{equation*}
      \alpha^k=\argmin_{0\leq \alpha \leq 1}\quad f(\G^k+\alpha \Delta\G)+g(\G^k+\alpha \Delta\G)
    \end{equation*}
or choose $\alpha^k$ so that it satisfies the Armijo rule.
\STATE $\G^{k+1}\leftarrow \G^k+\alpha^k \Delta\G$, {s}et $k\leftarrow k+1$
\UNTIL{Convergence}
\end{algorithmic}
\end{algorithm}

\section{Convergence analysis}

Before discussing convergence of the algorithm, we first reformulated
its step 3 so as to make its properties more accessible
and its convergence analysis more amenable.

The reformulation we propose is
\begin{equation}\label{eq:newstep}
 \s^k=\argmin_{\s \in \Pp}\quad \langle \nabla f(\G^k), \s - \G^k \rangle
 + g(\s) - g(\G^k)
 \end{equation}
and it is easy to note that the problem in
line 3 of Algorithm \ref{algo:fws},  is equivalent to this
one and leads to the same solution.

The above formulation allows us to derive a property that
highligths the relation between  problems \ref{eq:prob_fws} and \ref{eq:newstep}.

\begin{proposition}
  $\x^\star$ is a minimizer of  problem (\ref{eq:prob_fws}) if and
only if
\begin{equation}
 \G^\star=\argmin_{\s \in \Pp}\quad \langle \nabla f(\G^\star), \s - \G^\star \rangle
 + g(\s) - g(\G^\star)
\end{equation}
\end{proposition}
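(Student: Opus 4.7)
The plan is to prove both directions by exploiting the convexity of $f$ and $g$ together with the first-order optimality condition for constrained convex optimization. Let $h(\s) \equaldef \langle \nabla f(\G^\star), \s - \G^\star \rangle + g(\s) - g(\G^\star)$, and note that $h(\G^\star) = 0$, so the condition ``$\G^\star$ minimizes $h$ over $\mathcal{P}$'' is equivalent to ``$h(\s) \geq 0$ for all $\s \in \mathcal{P}$''.

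For the forward direction, I would assume $\G^\star$ minimizes $F = f+g$ on $\mathcal{P}$ and invoke the standard first-order optimality condition for a differentiable convex function on a convex set: $\langle \nabla f(\G^\star) + \nabla g(\G^\star), \s - \G^\star \rangle \geq 0$ for every $\s \in \mathcal{P}$. Then I would use convexity of $g$ in the form $g(\s) - g(\G^\star) \geq \langle \nabla g(\G^\star), \s - \G^\star \rangle$ to deduce
\[
 h(\s) \geq \langle \nabla f(\G^\star) + \nabla g(\G^\star), \s - \G^\star \rangle \geq 0,
\]
which, combined with $h(\G^\star)=0$, shows $\G^\star \in \argmin_{\s \in \mathcal{P}} h(\s)$.

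For the reverse direction, I would assume $h(\s) \geq 0$ for all $\s \in \mathcal{P}$ and this time apply convexity of $f$ to linearize: $f(\s) \geq f(\G^\star) + \langle \nabla f(\G^\star), \s - \G^\star \rangle$. Adding $g(\s)$ to both sides and rearranging,
\[
 F(\s) \geq f(\G^\star) + g(\G^\star) + h(\s) \geq F(\G^\star),
\]
so $\G^\star$ minimizes $F$ on $\mathcal{P}$.

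There is no real technical obstacle here; the only thing to flag is that the two directions use convexity asymmetrically, convexity of $g$ is needed to pass from the optimality of $F$ to the optimality of the linearized-$f$-plus-$g$ surrogate, whereas convexity of $f$ is what allows the surrogate minimizer to certify optimality of $F$. Implicitly one also needs $\mathcal{P}$ to be convex for the first-order characterization to be both necessary and sufficient, which is consistent with the setting assumed throughout the paper.
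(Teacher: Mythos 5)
Your proof is correct. It takes a mildly but genuinely different route from the paper's: the paper characterizes minimizers of both the original problem and the surrogate problem via normal-cone conditions --- $-\nabla f(\G^\star) - \nabla g(\G^\star) \in N_{\Pp}(\G^\star)$ for the former and $-\nabla f(\G^k) - \nabla g(\s^\star) \in N_{\Pp}(\s^\star)$ for the latter --- and then observes that upon setting $\G^k = \s^\star = \G^\star$ the two conditions become literally identical, so each implies the other. You instead work directly with function values of the surrogate $h$: the forward direction passes from the variational inequality $\langle \nabla f(\G^\star) + \nabla g(\G^\star), \s - \G^\star \rangle \geq 0$ to $h(\s) \geq 0$ via the gradient inequality for $g$, and the reverse direction bypasses first-order conditions entirely, using the gradient inequality for $f$ to get the global bound $F(\s) \geq F(\G^\star) + h(\s) \geq F(\G^\star)$. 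What your version buys is (i) an explicit accounting of which convexity assumption carries which implication --- convexity of $g$ for necessity, convexity of $f$ for sufficiency --- which the paper's symmetric normal-cone matching obscures, and (ii) a reverse direction that certifies global optimality directly rather than through the equivalence of the first-order condition with optimality. The paper's version is more compact because the two optimality conditions coincide on the nose. Your closing remark that $\Pp$ must be convex for the first-order characterization to be necessary and sufficient is well taken; the paper states only that $\Pp$ is compact, though convexity of $\Pp$ is used implicitly throughout.
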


\begin{proof}
  The proof relies on optimality conditions of constrained convex optimization
problem. Indeed, for a convex and differentiable $f$ and $g$, $\G^\star$ is
solution of  problem (\ref{eq:prob_fws}) if and only if  \cite{bertsekas_convexanalysis}
\begin{equation}\label{eq:optcond}
-\nabla f(\G^\star) - \nabla g(\G^\star) \in N_{\Pp}(\G^\star)
\end{equation}
where $ N_{\Pp}(\G)$ is the normal cone of $\Pp$ at $\G$.
In a same way, a minimizer $\s^\star$ of problem (\ref{eq:newstep}) at $\G^k$ can also be characterized as
\begin{equation}\label{eq:optcondlinear}
- \nabla f(\G^k)
 - \nabla g(\s^\star) \in N_{\Pp}(\s^\star)
\end{equation}
Now suppose that $\G^\star$ is a minimizer of problem (\ref{eq:prob_fws}),
it is easy to see that if we choose $\G^k = \G^\star$ then because
$\G^\star$ satisfies Equation \ref{eq:optcond}, Equation
\ref {eq:optcondlinear} also holds.
Conversely, if $\G^\star$ is a minimizer of problem (\ref{eq:newstep}) at $\G^\star$
then $\G^\star$ also satisfies Equation \ref{eq:optcond}.
\end{proof}

\subsection{Intermediate results and gap certificate}

We  prove several lemmas and we exhibit a gap  certificate that provide a
bound on the difference of the  objective value along the iterations
to the optimal objective value.

As one may remark our algorithm is very similar to a conditional
gradient algorithm \emph{a.k.a} Frank-Wolfe algorithm. As such, our
proof of convergence of the algorithm will follow similar lines
as those used by Bertsekas. Our convergence results is based
on the following proposition and definition given in \cite{bertsekas1999nonlinear}. \\

\begin{proposition}{    \cite{bertsekas1999nonlinear}}
\label{prop:bertsekas}
  Let $\{\G^k\}$ be a sequence generated by the feasible direction
method $\G^{k+1} = \G^{k} + \alpha_k \Delta \G$ with $\Delta \G^k = \s^k - \G^k$.
Assume that $\{\Delta \G^k\}$ is gradient related and that
$\alpha^k$ is chosen by the limited minimization or the Armijo rule, then
every limit point of $\{\G^k\}$ is a stationary point.
\end{proposition}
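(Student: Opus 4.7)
The plan is a proof by contradiction. Suppose that some subsequence $\{\G^k\}_{k\in K}$ converges to a limit point $\bar{\G}\in\Pp$ that fails the stationarity condition (\ref{eq:optcond}) for $F=f+g$. Compactness of $\Pp$ together with continuity of $F$ makes $\{F(\G^k)\}$ bounded below, and the descent property inherited from $\Delta\G^k$ being gradient related at $\G^k$ combined with the step-size rule yields a monotone decrease of $F(\G^k)$. Consequently $F(\G^k)-F(\G^{k+1})\to 0$, and this vanishing of successive decrements is precisely the quantity I will contradict.

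Along $K$, the gradient-related hypothesis guarantees that $\{\Delta\G^k\}_{k\in K}$ is bounded and that $\limsup_{k\in K,\,k\to\infty}\langle \nabla F(\G^k),\Delta\G^k\rangle<0$. Passing to a further subsequence, still indexed by $K$, I may assume $\Delta\G^k\to\bar\Delta$ and $\alpha^k\to\bar\alpha\in[0,1]$, with $\langle \nabla F(\bar\G),\bar\Delta\rangle=:-\delta<0$. The argument then splits on $\bar\alpha$. If $\bar\alpha>0$, the Armijo or limited-minimization bound gives
\[
F(\G^k)-F(\G^{k+1}) \;\ge\; -\sigma\,\alpha^k\,\langle \nabla F(\G^k),\Delta\G^k\rangle \;\longrightarrow\; \sigma\,\bar\alpha\,\delta \;>\; 0,
\]
contradicting the previous paragraph.

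The delicate case, which I expect to be the main obstacle, is $\bar\alpha=0$ under the Armijo rule. Here one exploits the fact that the previous trial step $\alpha^k/\beta$ was rejected, so
\[
F(\G^k + (\alpha^k/\beta)\Delta\G^k) - F(\G^k) \;>\; \sigma\,(\alpha^k/\beta)\,\langle \nabla F(\G^k),\Delta\G^k\rangle.
\]
Dividing by $\alpha^k/\beta$, invoking the mean-value theorem on the left-hand side, and passing to the limit using continuity of $\nabla F$ together with $\alpha^k\to 0$ and boundedness of $\Delta\G^k$, I obtain $\langle \nabla F(\bar\G),\bar\Delta\rangle\ge\sigma\,\langle \nabla F(\bar\G),\bar\Delta\rangle$, which forces this inner product to be nonnegative since $\sigma<1$, contradicting $\delta>0$. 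The exact-minimization case with $\bar\alpha=0$ is handled analogously via a first-order Taylor expansion. The chief technicality is the simultaneous extraction of subsequences so that $\alpha^k$, $\Delta\G^k$, and the mean-value intermediate points all converge along a common index set and to mutually consistent limits, which is where continuity of $\nabla F$ on the compact set $\Pp$ does the real work.
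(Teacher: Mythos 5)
The paper offers no proof of this proposition: it is imported verbatim from Bertsekas's \emph{Nonlinear Programming} (Prop.~1.2.1/2.2.1 there) and used as a black box, so there is nothing in the paper to compare your argument against. Your reconstruction is the standard textbook proof --- monotone descent plus vanishing decrements, boundedness and the negative limsup from gradient-relatedness, a subsequence extraction and case split on the limiting stepsize $\bar\alpha$, and the mean-value-theorem argument on the rejected trial step $\alpha^k/\beta$ for the delicate case $\bar\alpha=0$ --- and it is correct; the one loose end is the limited-minimization rule, which is handled most cleanly not ``analogously via a Taylor expansion'' but by the reduction $F(\G^{k+1})\le F(\G^k+\tilde\alpha^k\Delta\G^k)$ with $\tilde\alpha^k$ the Armijo stepsize, after which the Armijo argument applies unchanged.
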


\begin{definition} A sequence $\Delta \G^k$ is said
to be gradient related to the sequence $\G^k$ if  for any subsequence of $\{\G^k\}_{k \in K}$ that converges to
    a non-stationary point, the corresponding subsequence $\{\Delta
    \G^k\}_{k \in K}$ is bounded and satisfies
$$
\limsup_{k \rightarrow \infty, k \in K} \nabla F(\G^k)^\top\Delta \G^k
< 0
$$
\end{definition}

Basically, this property says that if a subsequence converges
to a non-stationary point, then at the limit point the feasible direction
defined by $\Delta \G$ is still a descent direction.  Before proving that
the sequence defined by  $\{\Delta \G^k\}$ is gradient related, we prove
useful lemmas. \\

\begin{lemma}
For any $\G^k \in \Pp$, each  $\Delta \G^k =  \s^k- \G^k$ defines a feasible descent direction.
\end{lemma}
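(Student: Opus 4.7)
The plan is to prove two things separately: feasibility of the direction and the descent property, both for the direction $\Delta \G^k = \s^k - \G^k$.

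Feasibility is immediate from convexity of $\Pp$. Since $\s^k$ is obtained as a minimizer over $\Pp$ (step 3 of the algorithm), and $\G^k \in \Pp$ by assumption, the segment $\G^k + \alpha \Delta \G^k = (1-\alpha)\G^k + \alpha \s^k$ lies in $\Pp$ for every $\alpha \in [0,1]$. So I would dispose of this in one sentence.

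The descent property is the substantive part. The plan is to combine the optimality of $\s^k$ in the reformulated subproblem (\ref{eq:newstep}) with convexity of $g$. First, since $\G^k$ is itself feasible in (\ref{eq:newstep}), plugging $\s = \G^k$ gives objective value $0$, so optimality of $\s^k$ yields
\begin{equation*}
\langle \nabla f(\G^k), \s^k - \G^k \rangle + g(\s^k) - g(\G^k) \;\leq\; 0.
\end{equation*}
Next, by convexity of $g$, the first-order inequality gives
\begin{equation*}
g(\s^k) - g(\G^k) \;\geq\; \langle \nabla g(\G^k), \s^k - \G^k \rangle.
\end{equation*}
Substituting this lower bound into the previous inequality produces
\begin{equation*}
\langle \nabla f(\G^k) + \nabla g(\G^k), \Delta \G^k \rangle \;=\; \nabla F(\G^k)^\top \Delta \G^k \;\leq\; 0,
\end{equation*}
which is the descent property in the non-strict sense.

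The only mildly delicate point (not strictly required by the lemma statement but important for later use in establishing the gradient-related property) is to note when the inequality is strict. If $\nabla F(\G^k)^\top \Delta \G^k = 0$, then both the convexity inequality for $g$ and the subproblem optimality inequality must hold with equality, which would mean $\G^k$ itself achieves the minimum in (\ref{eq:newstep}) at the point $\G^k$, and by the preceding proposition $\G^k$ is a minimizer of $F$ on $\Pp$. Thus whenever $\G^k$ is not a stationary point, the descent direction is strict. I expect no real obstacle here; the argument is essentially a two-line combination of subproblem optimality and convexity of $g$.
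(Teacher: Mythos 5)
Your proof is correct and follows essentially the same route as the paper's: feasibility from convexity of $\Pp$, then the descent property by testing the subproblem objective at $\s = \G^k$ to get $\langle \nabla f(\G^k), \s^k - \G^k\rangle + g(\s^k) - g(\G^k) \leq 0$ and substituting the first-order convexity bound on $g$. Your closing remark on strictness is a correct extra observation that the paper defers to the gradient-related lemma, but it is not needed here.
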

\begin{proof}
  By definition, $\s^k$ belongs to the convex set $\Pp$. Hence, for any
$\alpha^k \in [0,1]$,  $\G^{k+1}$ defines a feasible point. Hence  $\Delta \G^k$
is a feasible direction.

Now let us show that it also defines a descent direction. By definition of
the minimizer $\s^k$, we
have for all $\s \in \Pp$
\begin{align*}
  \langle\nabla f(\G^k), \s^k - \G^k \rangle + g(\s^k) - g(\G^k) \leq &
  \langle \nabla f(\G^k), \s - \G^k \rangle + g(\s) - g(\G^k)\label{eq:1}
\end{align*}
because the above inequality also holds for $\s = \G^k$, we have
\begin{equation}\label{eq:dd}
\langle\nabla f(\G^k), \s^k - \G^k \rangle + g(\s^k) - g(\G^k) \leq 0
\end{equation}
By convexity of $g(\cdot)$, we have
$$
g(\s^k) - g(\G^k) \geq \langle \nabla g(\G^k), \s^k- \G^k\rangle
$$
which, plugged in equation, \ref{eq:dd} leads to
$$
\langle\nabla f(\G^k) + \nabla g(\G^k), \s^k - \G^k \rangle   \leq 0
$$
and thus
$\langle\nabla F(\G^k), \s^k - \G^k \rangle   \leq 0$, which proves that $\Delta \G^k$
is a descent direction.
\end{proof}

The next lemma provides an interesting feature of our
algorithm. Indeed, the lemma states that the difference between the
optimal objective value and
the current objective value can be easily monitored.  \\
\begin{lemma}\label{lm:bound}
  For all $\G^k \in \Pp$, the following property holds
$$\min_{\s \in \Pp}\quad \big[\langle \nabla f(\G^k), \s - \G^k \rangle
+ g(\s) - g(\G^k)\big] \leq   F(\G^\star) - F(\G^k) \leq 0
$$
where $\G^\star$ is a minimizer of $F$.
In addition, if $\G^k$ does not belong to the set of minimizers
of $F(\cdot)$, then the second inequality is strict.
\end{lemma}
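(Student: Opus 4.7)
The plan is to prove the two inequalities separately. The right inequality $F(\G^\star)-F(\G^k)\le 0$ is immediate from the fact that $\G^\star$ is a minimizer of $F$ on $\Pp$. The strict version follows at once: if $\G^k$ is not itself a minimizer, then by definition $F(\G^\star)<F(\G^k)$, so the inequality is strict.

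For the left inequality, I would proceed in two simple steps. First, since $\G^\star \in \Pp$, the minimum over $\s\in\Pp$ can be upper bounded by the value of the objective at $\s=\G^\star$:
\begin{equation*}
\min_{\s \in \Pp}\big[\langle \nabla f(\G^k), \s - \G^k \rangle + g(\s) - g(\G^k)\big] \leq \langle \nabla f(\G^k), \G^\star - \G^k \rangle + g(\G^\star) - g(\G^k).
\end{equation*}
Second, I would use convexity (and differentiability) of $f$ to replace the linear term $\langle \nabla f(\G^k), \G^\star - \G^k \rangle$ by something no larger. Specifically, the gradient inequality yields
\begin{equation*}
\langle \nabla f(\G^k), \G^\star - \G^k \rangle \leq f(\G^\star) - f(\G^k).
\end{equation*}
Chaining these two bounds gives exactly
\begin{equation*}
\min_{\s \in \Pp}\big[\langle \nabla f(\G^k), \s - \G^k \rangle + g(\s) - g(\G^k)\big] \leq f(\G^\star)-f(\G^k)+g(\G^\star)-g(\G^k) = F(\G^\star)-F(\G^k),
\end{equation*}
which is the desired left inequality.

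There is no real obstacle here: both steps are one-liners relying only on the basic definition of a minimum and the standard first-order characterization of convex differentiable functions. The only subtlety is keeping track of inequality directions when invoking convexity of $f$, and noticing that the strict version of the right inequality is a direct consequence of $\G^k$ not being a minimizer, without needing any extra argument on the linearized subproblem.
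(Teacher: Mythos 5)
Your proof is correct and follows essentially the same route as the paper's: bound the minimum by the value of the linearized objective at $\s=\G^\star$, then use the first-order convexity inequality for $f$ (plus the added $g$ terms) to get $F(\G^\star)-F(\G^k)$, with the right inequality and its strict version coming directly from $\G^\star$ being a minimizer. The only difference is the order in which the two steps are presented, which is immaterial.
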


\begin{proof}
By convexity of $f$, we have
$$
f(\G^\star)- f(\G^k) \geq \nabla f(\G^k)^\top(\G^\star - \G^k)
$$
By adding $g(\G^\star)- g(\G^k)$ to both side of the inequality, we
obtain
$$
F(\G^\star)- F(\G^k) \geq \nabla f(\G^k)^\top(\G^\star - \G^k) +
g(\G^\star)- g(\G^k)
$$
and because $\G^\star$ is a minimizer of $F$, we also have
$ 0 \geq F(\G^\star)- F(\G^k)$
Hence, the following holds
$$ \langle \nabla f(\G^k), \G^\star - \G^k \rangle
+ g(\G^\star) - g(\G^k) \leq F(\G^\star) - F(\G^k) \leq 0
$$
and we also have
$$\min_{s \in \Pp}\quad \langle \nabla f(\G^k), \s - \G^k \rangle
+ g(\s) - g(\G^k) \leq F(\G^\star) - F(\G^k) \leq 0
$$
which concludes the first part of the proof. \\
Finally if $\G^k$ is not a minimizer of $F$, then we naturally have $0> F(\G^\star)- F(\G^k)$.
\end{proof}

\subsection{ Proof of convergence}
\label{sec:L}

Now that we have all the pieces of the proof, let us show the key ingredient.
\begin{lemma}The sequence $\{\Delta \G^k\}$ of our algorithm is gradient related.
\end{lemma}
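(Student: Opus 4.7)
The plan is to verify the two clauses of the gradient-related definition: boundedness of $\{\Delta\G^k\}_{k\in K}$ along a convergent subsequence, and the strict limsup condition $\limsup \nabla F(\G^k)^\top \Delta\G^k < 0$.

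First I would dispose of boundedness immediately: since both $\s^k$ and $\G^k$ live in $\Pp$, which is compact by hypothesis, $\Delta\G^k=\s^k-\G^k$ is bounded independently of any subsequence. So the only real content is the limsup inequality.

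For the limsup, the key trick is to bracket the quantity $\langle \nabla F(\G^k),\Delta\G^k\rangle$ between the linearized gap of Lemma~\ref{lm:bound} and a multiple of the actual objective gap. Concretely, since $\s^k$ is the minimizer in \eqref{eq:newstep}, Lemma~\ref{lm:bound} gives
\[
\langle \nabla f(\G^k), \s^k-\G^k\rangle + g(\s^k)-g(\G^k) \;\le\; F(\G^\star)-F(\G^k).
\]
Then convexity of $g$ (which was already invoked in the proof of the descent-direction lemma) yields $g(\s^k)-g(\G^k)\ge \langle \nabla g(\G^k),\s^k-\G^k\rangle$, and substituting produces the clean sandwich
\[
\langle \nabla F(\G^k),\Delta\G^k\rangle \;\le\; F(\G^\star)-F(\G^k).
\]

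Now let $K$ be a subsequence with $\G^k\to \bar\G$ where $\bar\G$ is non-stationary. Because $f,g$ are convex and differentiable on $\R^n$, $F$ is continuous, so $F(\G^k)\to F(\bar\G)$ along $K$. Non-stationarity of $\bar\G$ means $\bar\G$ is not a minimizer of $F$ (for convex differentiable $F$ the two notions coincide via the optimality condition \eqref{eq:optcond}), hence $F(\G^\star)-F(\bar\G)<0$. Passing to the limsup in the above inequality gives
\[
\limsup_{k\to\infty,\,k\in K} \nabla F(\G^k)^\top \Delta\G^k \;\le\; F(\G^\star)-F(\bar\G) \;<\; 0,
\]
which is exactly what the definition asks.

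The only step that needs care is the justification that non-stationarity at $\bar\G$ implies $F(\bar\G)>F(\G^\star)$; that is where convexity of $F$ (equivalence of stationary point and global minimizer) is used, and it is the reason the argument would not extend beyond the convex setting. The remaining manipulations are routine once Lemma~\ref{lm:bound} and convexity of $g$ are in hand.
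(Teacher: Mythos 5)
Your proof is correct and rests on the same two ingredients as the paper's (the gap certificate of Lemma~\ref{lm:bound} and convexity of $g$), but you assemble them in a slightly different and arguably cleaner order. The paper bounds $\nabla F(\G^k)^\top\Delta\G^k$ by the partially linearized gap evaluated at an \emph{arbitrary fixed} $\s\in\Pp$, passes to the limit along the subsequence for that fixed $\s$ (which implicitly uses continuity of $\nabla f$ and of $g$), then minimizes over $\s$ and only at that point invokes Lemma~\ref{lm:bound} at the limit point $\tilde\G$. You instead apply Lemma~\ref{lm:bound} at every iterate to get the uniform sandwich $\nabla F(\G^k)^\top\Delta\G^k \le F(\G^\star)-F(\G^k)$, and then need only the continuity of $F$ itself to pass to the limit. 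Your version buys a shorter limit argument with weaker continuity requirements; the paper's version keeps the quantity $\min_{\s}\langle\nabla f(\tilde\G),\s-\tilde\G\rangle+g(\s)-g(\tilde\G)$ explicit at the limit point, which ties the lemma more directly to the optimality characterization of Proposition 2.1. You are also right to flag that the step ``non-stationary $\Rightarrow$ not a minimizer $\Rightarrow$ $F(\G^\star)-F(\bar\G)<0$'' is exactly where convexity of $F$ enters; the paper uses the same fact (via the strict-inequality clause of Lemma~\ref{lm:bound}) without drawing attention to it.
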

\begin{proof}
 For showing that our direction sequence is gradient related, we have to
show that given a subsequence $\{\G^k\}_{k \in K}$ that converges
to a non-stationary point $\tilde \G$, the sequence
$\{\Delta \G^k\}_{k \in K}$ is bounded and that
$$
\limsup_{k \rightarrow \infty, k \in K} \nabla F(\G^k)^\top\Delta \G^k < 0
$$

Boundedness of the sequence naturally derives from the fact that
$\s^k \in \Pp$, $\G^k \in \Pp$ and the set $\Pp$ is bounded. \\

The second part of the proof starts by showing that
\begin{align}\nonumber
\langle \nabla F(\G^k), \s^k - \G^k \rangle &= \langle \nabla f(\G^k) + \nabla
g(\G), \s^k - \G^k \rangle
& \leq \langle \nabla f(\G^k), \s^k - \G^k \rangle
+ g(\s^k) - g(\G^k)\nonumber
\end{align}
where the last inequality is obtained owing to the convexity of $g$. Because
that inequality holds for the minimizer, it also holds for any vector $ s \in \Pp$ :
$$\langle \nabla F(\G^k), \s^k - \G^k \rangle
 \leq \langle \nabla f(\G^k), \s - \G^k \rangle
+ g(\s) - g(\G^k)
$$
Taking limit yields to
$$ \limsup_{k \rightarrow \infty, k \in K} \langle \nabla F(\G^k), \s^k - \G^k \rangle
 \leq \langle \nabla f(\tilde \G), \s - \tilde \G \rangle
+ g(\s) - g(\tilde \G)
$$
for all $\s \in \Pp$. As such, this inequality also holds
for the minimizer
$$ \limsup_{k \rightarrow \infty, k \in K} \langle\nabla F(\G^k), \s^k - \G^k \rangle
 \leq \min_{\s \in \Pp} \langle \nabla f(\tilde \G), \s - \tilde \G \rangle
+ g(\s) - g(\tilde \G)
$$
Now, since $\tilde \G$ is not stationary, it is not optimal and it does
not belong to the minimizer of $F$, hence according to the above
lemma \ref{lm:bound},
$$
\min_{\s \in \Pp} \quad\langle \nabla f(\tilde \G), \s - \tilde \G \rangle
+ g(\s) - g(\tilde \G) < 0
$$
which concludes the proof.
\end{proof}

This latter lemma proves that our direction sequence is gradient related, thus
proposition~\ref{prop:bertsekas} applies.

\subsection{ Rate of convergence}

We can show that the objective valut $F(\x_k)$ converges
towards $F(\x^\star)$ in a linear rate if we have some additional
smoothness condition of $F(\cdot)$. We can easily prove this
statement by following the steps proposed by Jaggi et al. \cite{jaggi2013revisiting} for
the conditional gradient algorithm.

We make the hypothesis that there exists a constant
$C_F$ so that for any $\x, \y \in \Pp$ and any $\alpha \in [0,1]$,
so that the inequality
$$
F((1-\alpha)\x + \alpha \y) \leq F(\x) + \alpha \nabla F(\x)^\top (\y-\x) + \frac{C_F}{2}\alpha^2
$$
holds.

Based on this inequality, for a sequence $\{\x_k\}$ obtained
from the generalized conditional gradient algorithm we have

\begin{align}\nonumber
 F(\x_{k+1}) - F(\x^\star)   & \leq F(\x_k) - F(\x^\star) + \alpha_k \nabla F(\x_k)^\top(s_k - \x_k)  +  \frac{C_F}{2}\alpha_k^2
\end{align}
Let us denote as $h(\x_{k})=F(\x_k) - F(\x^\star)$, now
by adding to both sides of the inequality $ \alpha_k[g(\s_k) - g(\x_k)]$
we have
\begin{eqnarray}\nonumber
  h(\x_{k+1}) + \alpha_k[g(\s_k) - g(\x_k)] & \hfill\\\nonumber
\leq  h(\x_k) +
\alpha_k [\nabla f(\x_k)^\top(\x^\star - \x_k) + g(\x^\star) - g(\x_k)]
+ \alpha_k \nabla g(\x_k)^\top(\s_k - \x_k) + \frac{C_F}{2}\alpha_k^2 \nonumber
\end{eqnarray}
where the second inequality comes from the definition of the search direction
$\s_k$. Now because, $f(\cdot)$ is convex we have
$ f(\x^\star) - f(\x_k) \geq \nabla f(\x_k)^\top (\x^\star - \x_k)$. Thus we have
\begin{eqnarray}\nonumber
  h(\x_{k+1}) + \alpha_k[g(\s_k) - g(\x_k)] & \leq  (1-\alpha_k) h(\x_k)  + \alpha_k \nabla g(\x_k)^\top(\s_k - \x_k) + \frac{C_F}{2}\alpha_k^2 \nonumber
\end{eqnarray}
Now, again, owing to the convexity of $g(\cdot)$, we have
$0 \geq -g(\s_k) + g(\x_k) \nabla g(\x_k)^\top(\s_k - \x_k)$. Using
this fact in the last above inequality leads to
\begin{equation}\label{eq:decrease}
  h(\x_{k+1})  \leq  (1-\alpha_k) h(\x_k)  + \frac{C_F}{2}\alpha_k^2
\end{equation}
Based on this result, we can now state the following
\begin{theorem}
For each $k \geq 1$, the iterates $\x_k$ of Algorithm $1$
satisfy
$$
F(\x_k)- F(\x^\star) \leq \frac{2 C_F}{k+2}
$$
\end{theorem}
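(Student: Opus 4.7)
The plan is to leverage the one-step decrease inequality (\ref{eq:decrease}) that has already been established,
$$h(\x_{k+1}) \leq (1-\alpha_k) h(\x_k) + \tfrac{C_F}{2}\alpha_k^2,$$
and turn it into the claimed $O(1/k)$ bound by induction, following the Frank--Wolfe template of Jaggi.

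First I would note that the step size $\alpha_k$ in Algorithm~\ref{algo:fws} is obtained by exact line search (or an Armijo-compatible rule), so $h(\x_{k+1})$ is \emph{at most} the value obtained with the specific admissible choice $\alpha_k = \tfrac{2}{k+2} \in [0,1]$. It therefore suffices to prove the bound for that deterministic schedule, since the actual iterates can only do better.

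Next I would run an induction on $k$. For the base case $k=1$, I would plug $\alpha_0 = 1$ (i.e.\ $\tfrac{2}{0+2}$) into (\ref{eq:decrease}) to get $h(\x_1) \leq \tfrac{C_F}{2} \leq \tfrac{2C_F}{3}$, which is the desired bound at $k=1$. For the inductive step, assuming $h(\x_k) \leq \tfrac{2C_F}{k+2}$, I would substitute $\alpha_k = \tfrac{2}{k+2}$ into (\ref{eq:decrease}) and simplify,
$$h(\x_{k+1}) \leq \Bigl(1-\tfrac{2}{k+2}\Bigr)\tfrac{2C_F}{k+2} + \tfrac{C_F}{2}\cdot\tfrac{4}{(k+2)^2} = \tfrac{2C_F(k+1)}{(k+2)^2}.$$
The final step is the elementary algebraic check $(k+1)(k+3)\leq (k+2)^2$, which yields $\tfrac{2C_F(k+1)}{(k+2)^2} \leq \tfrac{2C_F}{k+3}$, closing the induction.

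There is no real obstacle here since Lemma~\ref{lm:bound} and inequality (\ref{eq:decrease}) have already done the work specific to the generalized (splitting) setting; the remaining argument is the standard Frank--Wolfe recursion. The only subtlety worth flagging in the writeup is the justification that an exact (or Armijo) line search dominates the explicit schedule $\alpha_k = 2/(k+2)$, so that the induction carried out with that schedule transfers to the actual iterates of Algorithm~\ref{algo:fws}.
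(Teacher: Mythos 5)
Your proposal is correct and follows essentially the same route as the paper, whose proof simply invokes Equation (\ref{eq:decrease}) together with the induction of Jaggi et al.\ \cite{jaggi2013revisiting}; you have merely written out that induction (base case $\alpha_0=1$, inductive step with $\alpha_k=\tfrac{2}{k+2}$, and the check $(k+1)(k+3)\leq(k+2)^2$) explicitly. Your flag about why exact line search dominates the schedule $\alpha_k=\tfrac{2}{k+2}$ is the right subtlety to note, and your algebra checks out.
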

\begin{proof}
The proof stands on Equation \ref{eq:decrease} and on the same induction as the one used by Jaggi et al \cite{jaggi2013revisiting}.
\end{proof}
Note that this convergence property would also hold if we choose the
step size as $\alpha_k = \frac{2}{k+2}$.

\subsection{Related works and discussion}
\label{sec:related-works}

Here, we propose to take advantage of the composite structure of
the objective function and of an efficient solver of that partially linearized
problem.}
{Note that a result similar to Lemma \ref{lm:bound},
  denoted as the surrogate duality gap in
  \cite{jaggi2013revisiting}, exists for the conditional gradient:
$$\min_{\s \in \Pp}\quad \langle \nabla F(\G^k), \s - \G^k \rangle \leq   F(\G^\star) - F(\G^k) \leq 0.
$$
Using the convexity of $g(\cdot)$ one can see that
$$\quad \langle \nabla F(\G^k), \s - \G^k \rangle
\leq \langle \nabla f(\G^k), \s - \G^k \rangle
+ g(\s) - g(\G^k).
$$
This means that the bound expressed in Lemma \ref{lm:bound} is at
least as good that the one provided by the classical CG. In addition
when $g(\cdot)$ is strictly convex, our bond is strictly better which
suggests that our approach provides a better control of the convergence along the iterations.
}

The approach that is  the most related to this method is probably the
conditional gradient for composite optimization proposed in
\cite{harchaoui2013conditional}. In their work, the authors show that
when the constraint can be expressed as a bound on a norm, it
 is can be more efficient to solve an equivalent regularized
version, \emph{i.e.} a composite problem. By solving the equivalent
problem, they can benefit from efficient computation for nuclear norm
regularization and Total Variation in images. The main
difference with our approach is that they focus on potentially
nondifferentiable norm based regularization for $g(\cdot)$ whereas our
algorithm can be used for any convex and differentiable $g(\cdot)$.

Finally our approach is also closely related to projected gradient
descent and its spectral variant \cite{birgin2000nonmonotone}. As
discussed more in detail in section
\ref{sec:learn-with-eslat}, when $g(\cdot)$ is an euclidean distance, the
solving problem \eqref{eq:linear_prox} boils down to a projection onto
the convex $\Pp$. In practice the method is more general since it can
be used for any problem as long as problem \eqref{eq:linear_prox} can
be efficiently solved.

\section{ Application to machine learning problems and numerical
  experiments}
\label{sec:real-life-appl}

In this section, we showcase two interesting applications of CGS on difficult machine learning
problems. First, we discuss the application of CGS on the regularized
optimal transport problem, and then we
consider a more general application framework of our conditional
gradient splitting algorithm to the widely used elastic-net regularization.

\subsection{Regularized optimal transport}
\label{sec:regul-optim-ransp}
Regularized optimal transport have recently been developed as an elegant
way to model several classical problems in machine learning and image
processing. For instance, it is used for color transfer in images, a
problem that consists in matching the colors of a source image to that of a
target one~\cite{ferradans13,rabin2014non}. It has also been
successfully applied to the problem of unsupervised domain adaptation
where one seeks  to adapt a classifier from a source domain (used for learning) to a
target domain (used for testing) \cite{courty2014domain} and recently
it has been
also considered as an efficient way to compute Wasserstein barycenters in
\cite{Cuturi14}.

Regularized optimal transport consists in searching an optimal transportation plan to
move one source distribution onto a target one, with particular conditions on the plan. In the particular case
where only discrete samples $\{\xsi\}_{1 \leq i \leq n_s}$ and $\{\xti\}_{1 \leq i \leq n_t}$ are available for respectively the source and target distributions,
we note the corresponding distributions as vectors $\mu^s\in\R^{+n_s}$
and $\mu^t\in\R^{+n_t}$. Usually, $\mu^s$ and
$\mu_t$ are seen as histograms since
both belong to the probability simplex. In the optimal transport problem, those distributions are embedded in metric
spaces, which allows to define a transport cost. In the discrete case, this metric is given as a matrix $\C$, for which each
pairwise term $C_{i,j}$ measures the cost of transporting the $i$th component of $\mu^s$ onto the $j$th component of $\mu^t$.
OT aims at finding a positive matrix $\Ga$ which can be seen as a joint probability distribution between the source and target,
with marginals  $\mu^s$ and $\mu_t$. It belongs to the set of doubly stochastic matrices or Birkhoff polytope.
The optimal transport plan is the one which minimizes the total cost of the transport  of $\mu^s$ onto $\mu^t$. The regularization
applies on the matrix $\Ga$, and aims at favoring particular conditioning of this plan.
The corresponding optimization problem reads
\begin{align}\label{eq:cuturi}
\Gzeroreg = &\argmin_{\Ga \in \Pp} \quad \left < \Ga, \C \right >_F
+\lambda \Omega(\Ga),\\
&\quad \text{s.t.}\quad \Ga \geq 0, \quad \Ga \one_{n_t} = \mu^s,\quad \Ga^\top \one_{n_s} = \mu^t\nonumber
\end{align}
where $\Omega(\cdot)$ is a regularization term. In the particular case
where  one considers an information theoretic measure on $\Ga$, namely the negentropy, this term can be written as $\Omega(\Ga)=\Omega_{\bf IT}(\Ga)=\sum_{i,j} \Ga(i,j) \log \Ga(i,j)$. \citet{Cuturi13}
proposed an extremely efficient algorithm , which uses the scaling
matrix approach of Sinkhorn-Knopp~\cite{knight08} to solve this problem.

Other types of regularizations can be considered. In \cite{courty2014domain}, authors use a group sparse
regularization term to prevent elements from different classes to be matched in the target domain. In~\cite{ferradans13},
$\Ga$ is regularized such that an estimated positions of the source
samples, transported in the target domain, are
consistently moved with respect to their initial spatial
distribution. It has been applied with success to color transfer between images
where pixels are seen as 3D samples. The same approach has been also
tested for domain adaptation and 3D shape matching in
\cite{flamary2014optlaplace}.
It can be seen a Laplacian regularization,
where the regularization term reads
\begin{equation}
 \Omega_{\bf Lap}(\Ga) = \lambda_s\tr({\xt}^\top\Ga^\top \L_s \Ga\xt)+\lambda_t\tr( {\xs}^\top\Ga \L_t \Ga^\top \xs ).\nonumber
\label{eq:lap}
\end{equation}
Here, $\L_s$ and $\L_t$ are the Laplacian matrices and $\xs$ and $\xt$
are the matrices of source and target samples positions. The two terms
both aim at preserving
the shape of the two distributions, with respective regularization parameters  $\lambda_s$ and $ \lambda_t$.

We consider in this paper a problem which combines the two regularizations:
\begin{align}
\Gzeroreg = &\argmin_{\Ga} \quad \left < \Ga, \C \right >_F
+\lambda_1 \Omega_{\bf IT}(\Ga) +\lambda_2 \Omega_{\bf Lap}(\Ga) ,\\
&\quad \text{s.t.}\quad \Ga \geq 0, \quad \Ga \one_{n_t} = \mu^s,\quad \Ga^\top \one_{n_s} = \mu^t\nonumber
\label{eq:regularizedtransport}
\end{align}
This problem is hard to solve, for several reasons: the presence of
the entropy related term  prevents  quadratic
programming strategies to be applied, and because the Laplacian
matrices are generally dense the objective function can be costly to
compute. {But this problem fits particularly well into the CGS
  framework. Indeed, we have a constrained optimization
problem where the objective function is smooth and we have at our disposal
the efficient algorithm of \cite{Cuturi13} that is able to solve a
partially linearized
version of the objective function under the same constraints.} In this context,
, we define $f(\Ga)=\quad \left < \Ga, \C \right >_F +  \lambda_2 \Omega_{\bf Lap}(\Ga)$
and $g(\Ga)=\lambda_1 \Omega_{\bf IT}(\Ga)$.
{According to these definitions, the problem of finding $\s_k$ boils
down to
\begin{align}
\s_k = &\argmin_{\Ga} \quad \left < \Ga, \C  + \lambda_2 \nabla
\Omega_{\bf Lap}(\Ga) \right >_F
+\lambda_1 \Omega_{\bf IT}(\Ga) ,\\\nonumber
&\quad \text{s.t.}\quad \s\geq 0, \quad \Ga \one_{n_t} = \mu^s,\quad \Ga^\top \one_{n_s} = \mu^t
\label{eq:stepcuturi}\end{align}} for which  an efficient solver exists as it is equivalent to
the problem addressed in Equation ~\ref{eq:cuturi} in the
particular case of the negentropy regularization~\cite{Cuturi13}.
Note that while $\Omega_{\bf IT}$ is not differentiable in 0,
the Sinkhorn-Knopp algorithm never leads to exact zero coefficients (See Fig;
  1 in \cite{Cuturi13}), hence $g(.)$ is differentiable for all
  iterations.

We study the performances of our approach on a simulated example
similar to the one illustrated in Figure 3.1 of
\cite{ferradans13}. Samples from the source and target domains are
generated with a noisy cluster structure. In order to keep this
structure the symmetric Laplacien regularization is constructed from the graph
of $10$ nearest neighbors in each domain. The regularization
parameters have been selected as to promote a transport of the graph
structure with a reasonable regularization ($\lambda_1=1.7\times
10^{-2}$ and $\lambda_2=10^3$).
The experiments are performed with the dimensions
$n_s=n_t=100$ and $n_s=n_t=500$, leading to a total number of
variables $n_s\times n_t$.
In the experiments we compare our method with the conditional
gradient algorithm on the exact same problem.
{In this case, fully linearizing the objective function leads to
a linear problem for finding $\s_k$.} Note that while other approaches based on
proximal methods have been proposed to solve this kind of optimization
problem \cite{papadakis14}, we do not think we can compare fairly since
they are not interior point methods and the intermediate iterations
might violate the constraints.

We report in Figure~\ref{fig:ot_reg} both objective values along iterations (left column) and along the overall computational
time (right column) for the two problems.
Two different implementations of linear programming solvers were used for comparisons: CVXopt~\cite{cvxopt} and MOSEK~\cite{mosek}.
In the second examples, the CVXopt experiments were not reported as it took too much time to complete the optimization. As one can see,
the CGS outperforms in all cases the CG approaches, both in  overall computation time, and also because it reaches better
objective values. This difference is also amplified when the size of the problem increases, and can be seen on the last row. {Note that the gain in computational time brought by our algorithm is about  an order of magnitude better than a CG algorithm using MOSEK.}

\begin{figure*}[t]
  \centering    \includegraphics[width=1\linewidth]{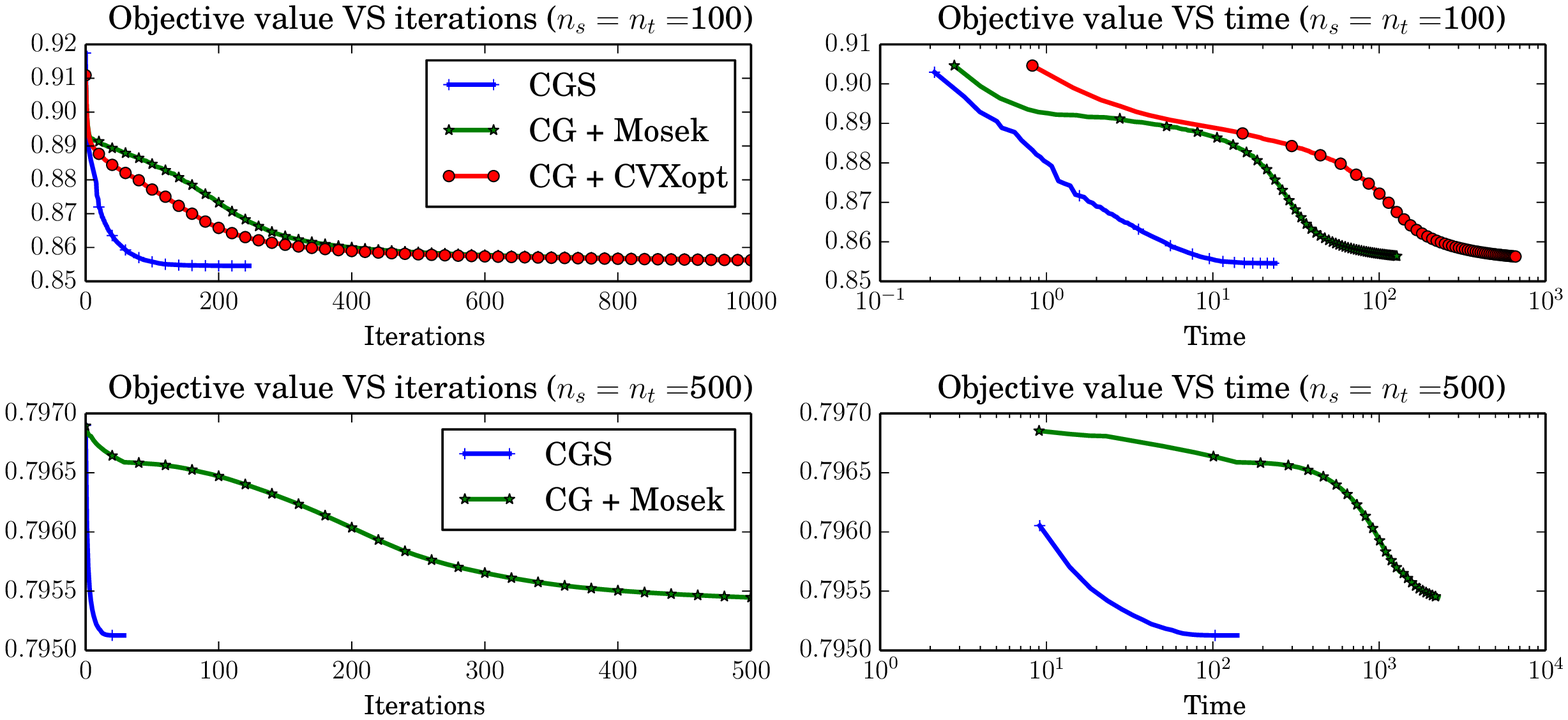}

  \caption{Objective value for a regularized optimal transport problem
    of size 100x100 (top) and 500x500 (bottom)
    along the iterations and along time for CGS and CG with different solvers.}
  \label{fig:ot_reg}
\end{figure*}

\subsection{Learning with elastic-net regularization}
\label{sec:learn-with-eslat}

Elastic-net regularization has been introduced by \cite{zou2005regularization} as a way to balance some undesirable behaviour of the $\ell_1$
penalty.   It has
been mainly motivated by the fact that it allows the selection of groups
of correlated features and yields consistent prediction and feature
selection \cite{de2009elastic}. This regularization is composed of the
weighted sum of
a $\ell_1$-norm and a squared $\ell_2$-norm regularization.
In this work, we want to show that
the $\ell_1$ norm-constrained version of the elastic-net problem can be
efficiently learned by our  conditional gradient splitting algorithm.
Denote as $\{\z_i,y_i\}_{ 1 \leq i \leq m}$ a set of observed data, with
$\z_i \in \R^n$ being a feature vector and $y_i \in \R$ the target samples.
The objective is to learn a linear model $\z^\top \x$ that predicts
the target $y$. For this purpose, we consider the following constrained
elastic-net problem
\begin{align}
\min_{\x \in \R^n}\quad&\ L(\y,\Z\x) + \lambda \x^\top\x \\
\text{s.t.} &\,\, \|\x\|_1 \leq \tau\label{eq:elasticnet}
\end{align}
where $\Z \in \R^{m \times n}$ is the matrix composed by stacking
rows of $\z_i^\top$, $\y$ is the vector of the $y_i$.
and  $L(\cdot,\cdot)$ is a differentiable loss function that measures
the discrepancy between each coordinate of $\y$ and $\Z\x$.
In the context
of our conditional gradient splitting algorithm, we will define  $f(\x)$ as
$L(\y,\Z\x)$ and $g(\x)= \lambda \x^\top\x$.
Accordingly, the step $3$ of the algorithm becomes
$$
 \s^k= \argmin_{\|\x\|_1 \leq \tau}\quad \x^\top\nabla f(\x^k)\ + \lambda \x^\top\x
$$
Interestingly, this problem can be easily solved as it can be shown
that the above problem is equivalent to
$$
 \s^k= \argmin_{\|\x\|_1 \leq \tau} \quad\Big\|\x - \big( -\frac{1}{2\lambda}\nabla f(\x^k) \big)\
\Big \|_2^2
$$
Hence, the feasible point $\s_k$  is the projection of the scaled negative
gradient onto the set defined by the constraint. As such, in this particular
case where $g(\x)$ is a quadratic term, our algorithm has the flavor of
a gradient projection method. The main difference resides in the fact that
in our CGS algorithm, it is the negative gradient that is projected
onto the constraint set, instead of the point resulting from a step along
the negative gradient. Formally, at each iteration, we thus have
\begin{equation}
  \label{eq:stepelastic}
  \x^{k+1} = (1-\alpha) \x^k + \alpha \Pi_{\|\x\|_1 \leq \tau}
\left(-\frac{1}{2\lambda} \nabla f(\x^k) \right)
\end{equation}
and $\x^{k+1}$ is a linear combination of the current iterate
and the projected scaled negative gradient.

This framework can be extended
to any constraint set, and we can expect the algorithm to be
efficient as long as projection of the set can be computed in a cheap
way. In addition the algorithm can be used for any convex and
differentiable data fitting term and thus it can be used also for
classification with squared hinge loss \cite{chapelle2007training} or
logistic regression \cite{krishnapuram2005sparse}. Note however that
in these latter cases, the optimal step $\alpha$ can not
be computed in a closed form as in a least-square loss context
\cite{ferradans13}.

In the following, we have illustrated   the behaviour of our
conditional gradient splitting
algorithm compared to classical projected gradient algorithms on
 toy and real-world classification problems using an elastic-net logistic
regression problem.
As such, we have considered the limited-memory projected quasi-newton
(PQN) method \cite{schmidt09} and the spectral projected gradient
(SPG) method \cite{birgin2000nonmonotone} both implemented by Schmidt. In our comparisons, we have also included the
original conditional gradient
algorithm as well as an heuristic conditional gradient splitting with
step $\alpha^k$ set as $\alpha^k=\frac{2}{k+2}$.
For all algorithms, we have used a monotone
armijo rule as a linesearch algorithm and  the stopping criterion is based on the fixed point property of a minimizer of
(\ref{eq:elasticnet}) (also used
by Schmidt)
$$ \Big \|\Pi_{\|\x\|_1 \leq \tau}(\x - \nabla F(\x)) - \x \Big \|_\infty  \leq \varepsilon$$
In our experiments, we have set $\varepsilon = 10^{-5}$ and we have also
set the maximal number of iterations to $10000$ for all algorithms.
Our objective in these experiments is essentially to show that our
conditional gradient splitting is as efficient as other competitors.

The toy problem is the same as the one used by \cite{obozinski10:_multi_task_featur_selec}.  The task is a
binary classification problem in $\R^d$.  Among these $d$ variables,
only $T$ of them define a subspace of $\R^d$ in which classes can be
discriminated.  For these $T$ relevant variables, the two classes
follow a Gaussian pdf with means respectively $\mu$ and $-\mu$ and
covariance matrices randomly drawn from a Wishart distribution.  $\mu$
has been randomly drawn from {$\{-1,+1\}^T$. The other $d-T$}
non-relevant variables follow an \emph{i.i.d} Gaussian probability
distribution with zero mean and unit variance for both classes.  We have  sampled $N$ examples and used $80\%$ of them for training and the rest for
the testing. Before learning, the training set has been normalized to zero mean and unit variance and  test set has been rescaled accordingly. The hyperparameters
$\lambda$ and $\tau$ have been
roughly set so as to maximize the performance on the test set.
We have chosen to initialize  all algorithms with the zero vector.

\begin{figure*}[t]
 \centering
\includegraphics[width=0.6\linewidth]{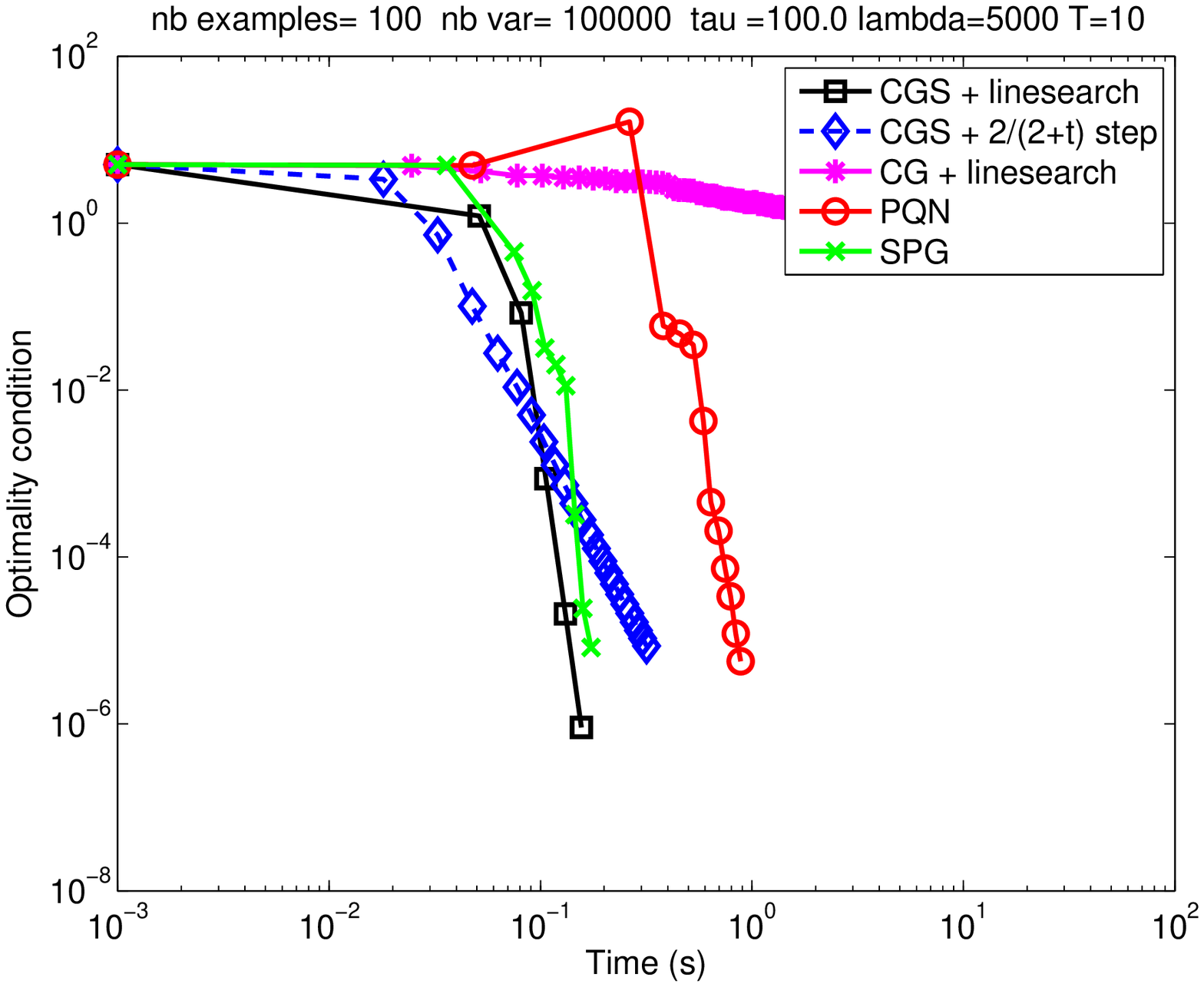}
\includegraphics[width=0.6\linewidth]{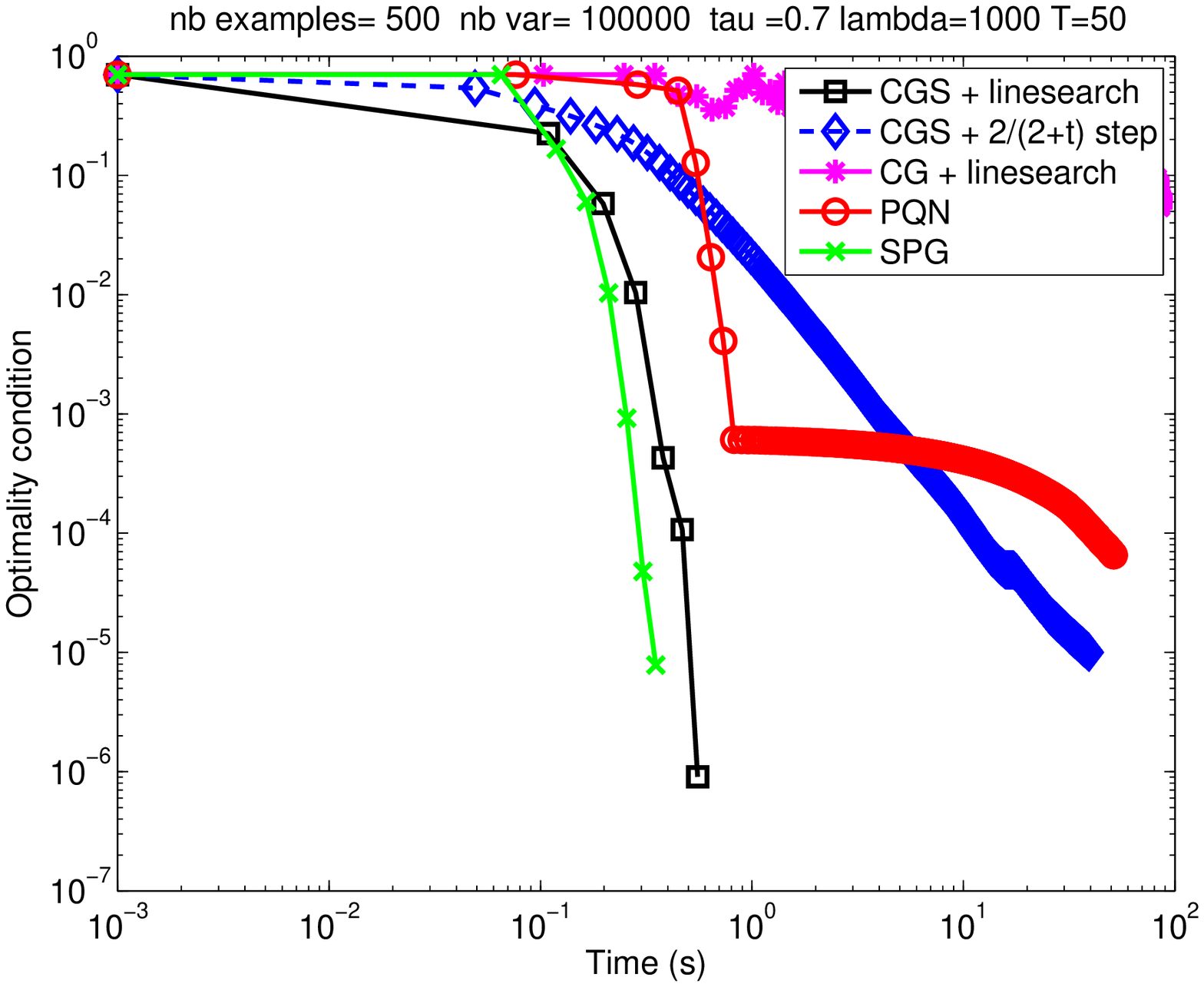}
\includegraphics[width=0.6\linewidth]{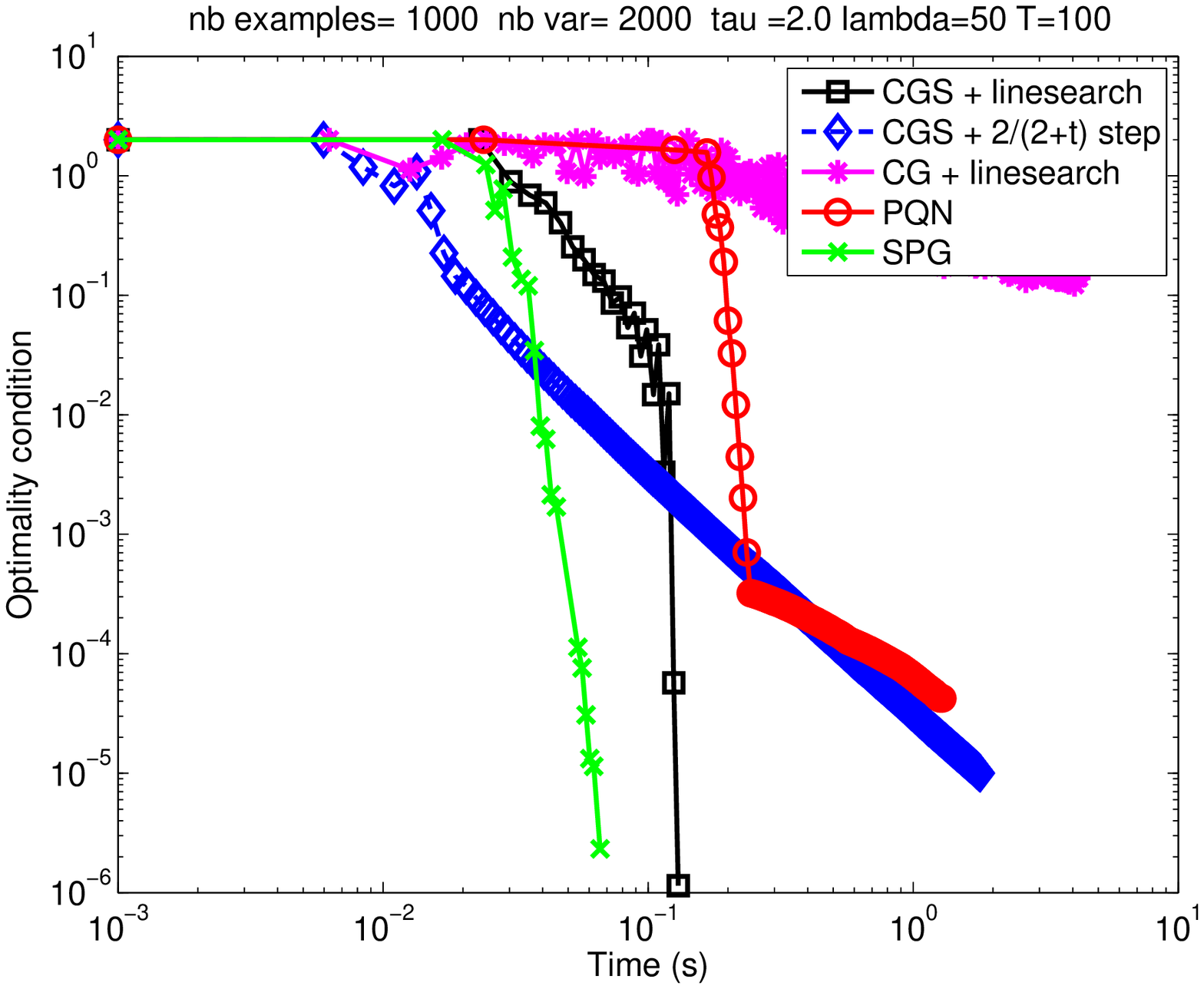}
 \caption{Examples of evolution of the optimality condition for three different
learning setting. (left) highly-sparse and very few examples. (middle) sparse and few examples. (right) sparse with reasonable ratio of examples over variables.}
 \label{fig:elasticnet}
\end{figure*}

 \begin{figure*}[t]
 \centering
\includegraphics[width=0.7\linewidth]{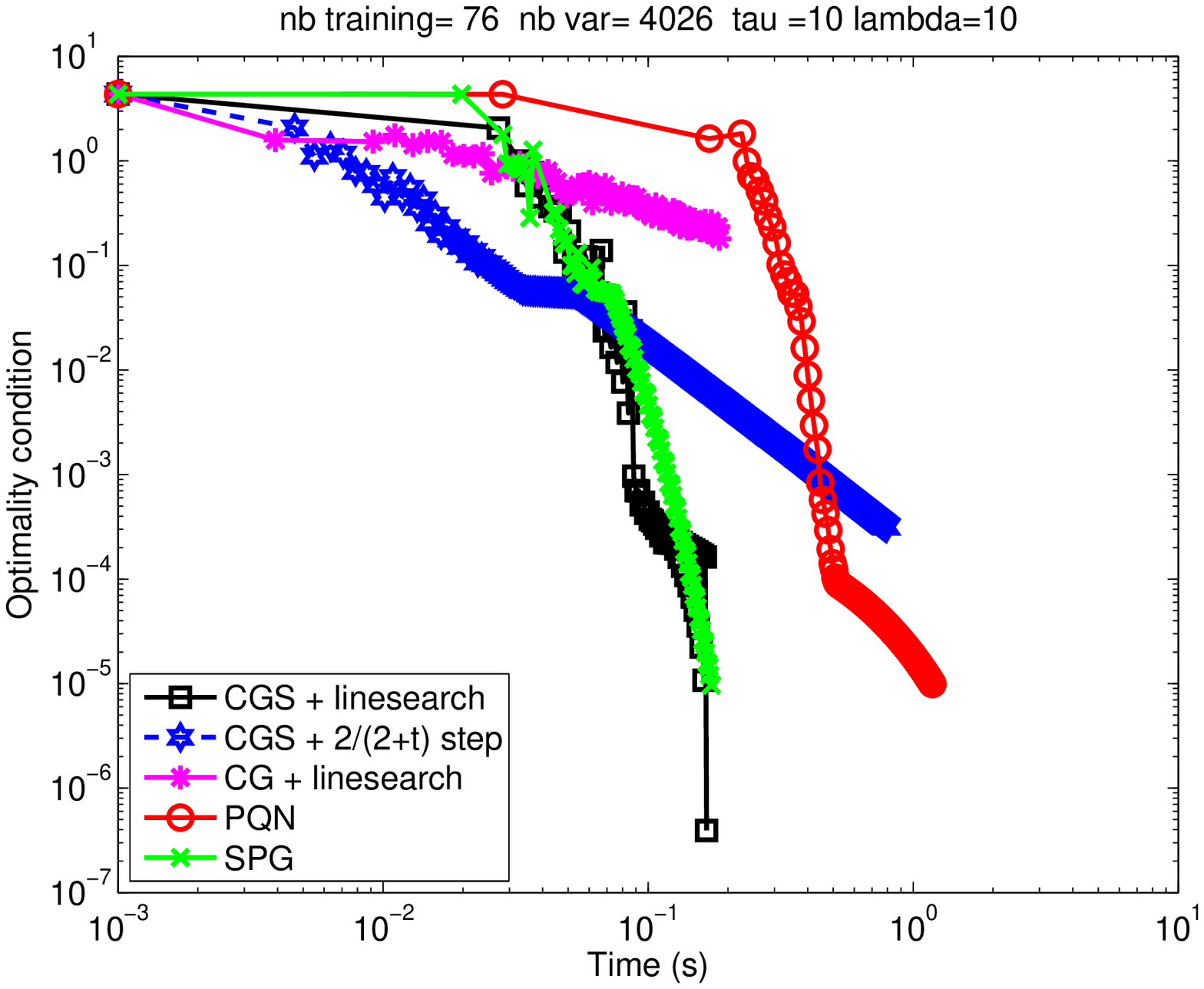}
\includegraphics[width=0.7\linewidth]{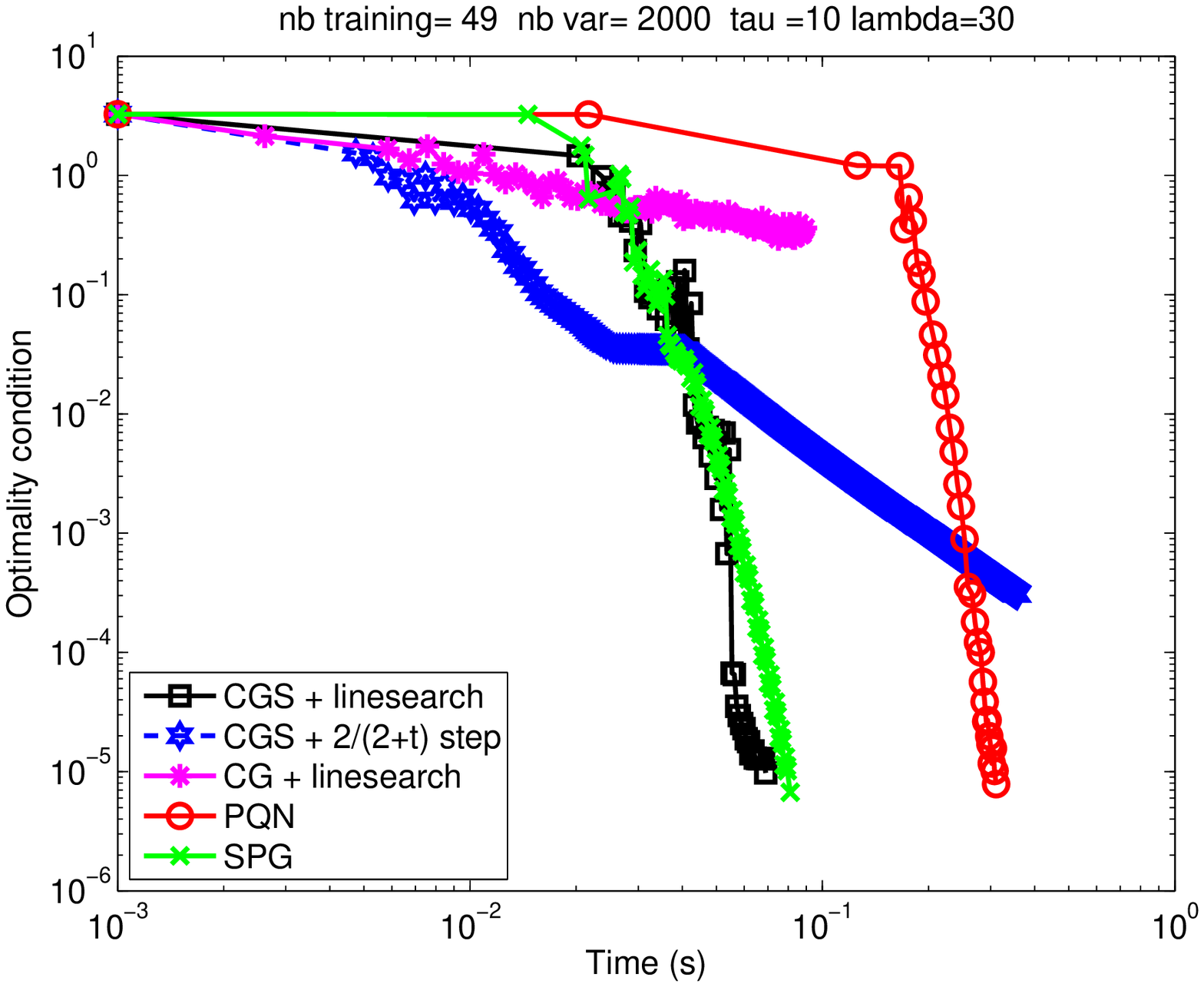}
 \caption{Optimality  conditions evolving curves on the (left) \emph{colon}
and (right) \emph{lymphoma} datasets.}
 \label{fig:real}
\end{figure*}

Figure \ref{fig:elasticnet} presents some examples of how the optimality condition of each method evolves with respect to time for different settings of number
of examples, variables and number of relevant variables. Independently of the
settings, we can note that the conditional gradient algorithm performs very poorly
and is not efficient at all compared to all other algorithms. Compared
to a projected quasi-newton, our conditional gradient splitting algorithm
is far more efficient, and on all the settings it converges faster.
Finally, it appears that our algorithm performs on par with the spectral
projected gradient algorithm, as it is sometimes faster and in other cases
slower. This is a very interesting feature given the simplicity of the
algorithm steps.
In addition, we can note the nice behaviour of our CGS algorithm with
empirical steps which is globally less efficient than CGS with linesearch
and the SPG algorithms but provide better improvements of the optimality
condition in the first iterations.

For illustrating the   algorithm behaviour on real datasets, we have
considered
two bioinformatic problems for which few examples are available while the number
of feature is large: the \emph{colon} and \emph{lymphoma} datasets.
 We have used the same experimental setting as for the toy dataset.
Figure \ref{fig:real} reports typical examples of convergence behaviour.
We can note again than our CGS algorithm is slightly
more efficient than the spectral projected gradient algorithm and more efficient than the limited-memory projected quasi-newton algorithm. Interestingly,
in these real problems, the CGS algorithm with fixed step
is the most efficient one for reaching rough optimality conditions (of
the order of $10^{-2}$).

\subsubsection*{Acknowledgments}
This work was partly funded by the CNRS PEPS Fascido program under the Topase project.

\bibliographystyle{alpha}
\bibliography{optim,DA,TO}

\end{document}